\newtheorem{definition}{Definition}
\DeclareMathOperator*{\argmin}{arg\,min}
\newtheorem{theorem}{Theorem}
\theoremstyle{plain}
\theoremstyle{definition}
\theoremstyle{remark}
\newcommand\blfootnote[1]{%
  \begingroup
  \renewcommand\thefootnote{}\footnote{#1}%
  \addtocounter{footnote}{-1}%
  \endgroup
}
\title{A principled approach to model validation in domain generalization}
\author{Boyang Lyu$^{1\dagger }$, Thuan Nguyen$^{1,2\dagger }$, Matthias Scheutz$^2$, Prakash Ishwar$^3$, Shuchin Aeron$^1$
  \thanks{Author affiliations  \newline $^1$ - Tufts University, Dept. of ECE, $^2$ - Tufts University, Dept. of CS., $^3$ - Boston University, Dept. of ECE  \newline $^\dagger$ - These authors contributed equally.
 \textbf{Corresponding authors}: Boyang Lyu, email: \url{Boyang.Lyu@tufts.edu}.} 
  } 
\date{}
\begin{document}
\maketitle

\begin{abstract}
Domain generalization aims to learn a model with good generalization ability, that is, the learned model should not only perform well on several seen domains but also {on} unseen domains with different data distributions.
{State-of-the-art domain generalization methods typically train a representation function followed by a classifier jointly to minimize both the classification risk and the domain discrepancy. However, when it comes to model selection, most of these methods rely on traditional validation routines that select models solely based on the lowest classification risk on the validation set.} In this paper, we theoretically demonstrate a trade-off between minimizing classification risk and mitigating domain discrepancy, \textit{i.e.,} it is impossible to achieve the minimum of these two objectives simultaneously. Motivated by this theoretical result, we propose a novel model selection method suggesting that the validation process {should} account for both the classification risk and the domain discrepancy. {We validate the effectiveness of the proposed method by numerical results on several domain generalization datasets.}
\end{abstract}
\section{Introduction and related work}
\blfootnote{© 2023 IEEE. Personal use of this material is permitted. Permission from IEEE must be obtained for all other uses, in any current or future media, including reprinting/republishing this material for advertising or promotional purposes, creating new collective works, for resale or redistribution to servers or lists, or reuse of any copyrighted component of this work in other works.}
The success of traditional machine learning methods relies on an important assumption that the training and the test data are independent and identically distributed (\textit{i.i.d}). However, in many real-world scenarios, the distributions of data in the training set and test set are not identical due to the ``distribution-shift" phenomenon. Mitigating the problem caused by the distribution shift is the primary goal of the Domain Generalization (DG) problem, where a model is trained using data from several seen domains but later will be applied to unseen (unknown but related) {domains} with different data distributions.

To address DG problem, a large number of methods {consider} training a representation function that can learn domain-invariant features by minimizing the domain discrepancy in the representation space \cite{arjovsky2020invariant,lyu2021barycentric,nguyen2022conditional,li2021invariant,sun2016deep,li2018domain}. Though the domain discrepancy has been accounted for at the training step, few works considered it for model selection at the validation step \cite{9847099}. Indeed, following traditional machine learning settings, most of the state-of-the-art DG methods form a validation set using a small portion of data from all seen domains and select the model that achieves the lowest classification risk or highest classification accuracy on it. However, unlike the traditional machine learning settings where a model with {lower} classification risk on the validation set is likely to perform {better} on the test set, we theoretically show that for DG problem, where the \textit{i.i.d} assumption does not hold, selecting the model with minimum classification risk may enlarge the domain discrepancy, subsequently leading to a non-optimal model on the unseen domain. We thus argue that one needs to consider both the classification risk and the domain discrepancy for selecting good models on unseen domains.  

We summarize our contributions as follows:

\begin{enumerate}
    \item We theoretically show that there is a trade-off between minimizing classification risk and domain discrepancy. This trade-off leads to the conclusion that {only targeting a model with the lowest classification risk on the validation set may encourage distribution mismatch between domains}  (enlarging domain discrepancy), and {reduce the model's generalization ability.}
  
    \item {Based on our theoretical result and considering the limited attention given to DG-specific validation processes, we propose a simple yet effective validation/model selection method that integrates both the classification risk and domain discrepancy as the validation criterion. We further demonstrate the effectiveness of this approach on various DG benchmark datasets.}
\end{enumerate}

The trade-off between minimizing the classification risk and domain discrepancy has been mentioned in the literature \cite{ben2006analysis,zhao2019learning}\footnote{The works in \cite{ben2006analysis,zhao2019learning} are for domain adaptation, not domain generalization. However, one may derive a similar conclusion by replacing the ``source domain" with seen domain and the ``target domain" with unseen domain.}. Shai \textit{et al.}  \cite{ben2006analysis} constructed an upper bound on the risk of the target domain, composed of the risk from the source domain and the discrepancy between the target and source domains. The authors suggested that there must be a trade-off between minimizing the domain discrepancy and minimizing the seen domain's risk {but} did not propose any further details on how this trade-off is determined and characterized. Zhao \textit{et al.} \cite{zhao2019learning} showed that the sum of the risks from source and target domains is lower bounded by the distribution discrepancy between domains. If the discrepancy between domains is large, one can not simultaneously achieve small risks {on} both domains. Though sharing some similarities, our theoretical result differs from \cite{zhao2019learning} since Zhao \textit{et al.} considered the trade-off between minimizing the risks of different domains rather than the trade-off between optimizing the classification risk and the domain discrepancy. On the other hand, {most DG works adopt the model selection methods following the traditional machine learning settings,} \textit{i.e.}, a validation set is first formed by combining small portions of data from all seen domains and the model that produces the lowest classification risk or highest classification accuracy on the validation set is then selected. 

To the best of our knowledge, there are only a few works that explore {new} model selection methods {under} DG {settings} \cite{wald2021calibration,ye2021towards,sagawa2019distributionally,albuquerque2019generalizing,arpit2021ensemble}. The most related work of this study is \cite{albuquerque2019generalizing}, where the authors mentioned that they use the training loss (including both classification risk and adversarial domain discrepancy loss) on the validation set for model selection. However, it is not clear from their paper {and} their released code how the classification risk and the adversarial domain discrepancy loss are used to validate the model and how these two terms are balanced. 
In contrast, we propose an alternative approach for combining the classification risk and the domain discrepancy loss in a meaningful way in light of our theoretical results.

\section{Problem Formulation}
\label{sec: problem setup}

\subsection{Notations}
{Let $\mathcal{X}$, $\mathcal{Z}$, $\mathcal{Y}$ denote the input space, the representation space, and the label space, $\mathcal{D}^{(s)}$ and $\mathcal{D}^{(u)}$ represent the seen and unseen domain, respectively. $f: \mathcal{X} \rightarrow \mathcal{Z}$ and $g: \mathcal{Z} \rightarrow \mathcal{Y}$ are the representation function and the classifier. We use capital letters for the random variables in different spaces and lowercase letters for samples. Specifically, we denote $X$ as the input random variable, $Z$ as the extracted feature random variable, and $Y$ as the label random variable. 
The input samples, feature samples, and  labels of input samples are denoted as $\bm{x}, \bm{z}$, and $y(\bm{x})$, respectively. {Finally, we use $p^{(s)}(\cdot)$ and $p^{(u)}(\cdot)$ to denote the distributions or joint distributions corresponding to the variables inside the bracket on seen domain and unseen domain, respectively.}}

\subsection{Problem formulation}

For a representation function $f$ and a classifier $g$, the classification risk induced by $f$ and $g$ on seen domain is{:}
\begin{eqnarray}
    C^{(s)}\!(f,g) \!&=&\! \int_{\bm{x} \in \mathcal{X}} p^{(s)}(\bm{x}) \ell(g(f(\bm{x})), y^{(s)} (\bm{x}) ) d \bm{x} \nonumber\\
    &=&\! \int_{\bm{x} \in \mathcal{X}} \! \int_{\bm{z} \in \mathcal{Z}} \! p^{(s)} \! (\bm{x},\bm{z}) \ell(g(\bm{z}), y^{(s)} (\bm{x})) d \bm{x} d \bm{z}
\end{eqnarray}where $\ell(\cdot,\cdot)$ is a distance measure {that} quantifies the mismatch between the label outputted by classifier $g$ and the true label.

For a representation function $f$, the distribution discrepancy between seen and unseen domains induced by $f$ is{:}
\begin{equation}
\label{eq: epsilon}
D(f) = d(p^{(u)}(Y,Z) || p^{(s)}(Y,Z))
\end{equation}where $d(\cdot || \cdot)$ is a divergence measure between two distributions. Indeed, to deal with the ``distribution-shift", one usually looks for a mapping $f$ such that the discrepancy between distributions of seen and unseen domains $D(f)$ is small \cite{nguyen2022joint,david2010impossibility}. 

{A large number of DG works focus on training a model that minimizes both the classification risk $C^{(s)}(f,g)$ and the discrepancy $D(f)$ using data from seen domains \cite{arjovsky2020invariant,lyu2021barycentric,nguyen2022conditional,li2021invariant,sun2016deep,li2018domain}}. Note that while $C^{(s)}(f,g)$ can be directly minimized, one usually {need to} approximately/heuristically optimize $D(f)$ by optimizing the distribution discrepancy between several seen domains. {Since there are already well-established theoretical and empirical works on minimizing the classification risk and domain discrepancy, our {work} aims to highlight the trade-off between these two objectives (Sec~\ref{sec: trade-off}) and argues that taking both objectives into account during model selection can improve model's performance on unseen domains (Sec.~\ref{sec: method}).}s  

\section{Trade-off between classification risk and domain discrepancy} 
\label{sec: trade-off}

We first begin with a definition.
\begin{definition}[Classification risk-domain discrepancy function]
\label{def: 3}
For any representation function $f$ and classifier $g$, define:
\begin{equation}
\begin{aligned}
&T(\Delta) = \min_{f: \mathcal{X} \rightarrow \mathcal{Z}} D(f) = \min_{f: \mathcal{X} \rightarrow \mathcal{Z}} d(p^{(u)}(Y,Z) || p^{(s)}(Y,Z))\\
& \textrm{s.t.} \quad C^{(s)}(f,g) \!=\!\int_{\bm{x} \in \mathcal{X}} \! p^{(s)}(\bm{x})  \ell(g(f(\bm{x})), y^{(s)}(\bm{x}) ) d \bm{x} \leq \Delta \label{eq: classification risk-domain alignment function}
\end{aligned}
\end{equation}where $\Delta$ is a positive number, $\ell(\cdot, \cdot)$ is a distance measure, and $d(\cdot || \cdot)$ is a divergence measure.
\end{definition}

$T(\Delta)$ is the minimal discrepancy between the joint distribution of the unseen domain and seen domain if the classification risk on seen domain $C^{(s)}(f,g)$ does not exceed a positive threshold $\Delta$. Next, we formally show that there is a trade-off between minimizing the distribution discrepancy $D(f)$ and minimizing the classification risk $C^{(s)}(f,g)$. 

\begin{theorem}[Main result]
\label{theorem: 1}
If the divergence measure $d(a || b)$ is convex (in both $a$ and $b$), for a fixed classifier $g$, $T(\Delta)$ defined in (\ref{eq: classification risk-domain alignment function}) is monotonically non-increasing, and convex.
\end{theorem}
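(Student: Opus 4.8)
The plan is to treat $T$ as the value function of a parametric minimization problem and exploit (i) nestedness of the feasible sets for monotonicity and (ii) a mixing/interpolation argument together with joint convexity of $d$ for convexity. For monotonicity, I would simply observe that if $\Delta_1 \le \Delta_2$ then $\{f : C^{(s)}(f,g) \le \Delta_1\} \subseteq \{f : C^{(s)}(f,g) \le \Delta_2\}$, so the minimum of $D(f)$ over the larger set cannot exceed the minimum over the smaller one; hence $T(\Delta_2) \le T(\Delta_1)$. This requires essentially no computation.

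For convexity, fix $\Delta_1,\Delta_2>0$ and $\lambda\in[0,1]$, and write $\Delta_\lambda = \lambda\Delta_1 + (1-\lambda)\Delta_2$. Let $f_1$ be (near-)optimal for the problem defining $T(\Delta_1)$, i.e. $C^{(s)}(f_1,g)\le\Delta_1$ and $D(f_1)=T(\Delta_1)$, and similarly let $f_2$ attain $T(\Delta_2)$ subject to $C^{(s)}(f_2,g)\le\Delta_2$. (If the minima are not attained, I would run the argument with $\epsilon$-optimal maps and let $\epsilon\downarrow 0$ at the end.) I then construct the mixed representation $f_\lambda$ that, on input $\bm{x}$, emits $f_1(\bm{x})$ with probability $\lambda$ and $f_2(\bm{x})$ with probability $1-\lambda$ (equivalently, the Markov kernel $p_{f_\lambda}(\bm z\mid\bm x) = \lambda\,\delta_{f_1(\bm x)} + (1-\lambda)\,\delta_{f_2(\bm x)}$). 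Since $C^{(s)}$ is linear in the kernel $p(\bm z\mid\bm x)$, we get $C^{(s)}(f_\lambda,g) = \lambda C^{(s)}(f_1,g) + (1-\lambda)C^{(s)}(f_2,g) \le \lambda\Delta_1 + (1-\lambda)\Delta_2 = \Delta_\lambda$, so $f_\lambda$ is feasible for the problem defining $T(\Delta_\lambda)$. Moreover, because the marginals $p^{(s)}(\bm x)$, $p^{(u)}(\bm x)$ and the labelings are held fixed, the induced joint laws also mix linearly: $p^{(s)}_{f_\lambda}(Y,Z) = \lambda\,p^{(s)}_{f_1}(Y,Z) + (1-\lambda)\,p^{(s)}_{f_2}(Y,Z)$ and likewise on the unseen domain. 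Applying joint convexity of $d(a\|b)$ gives
\begin{equation*}
D(f_\lambda) = d\big(\lambda p^{(u)}_{f_1} + (1-\lambda)p^{(u)}_{f_2}\ \big\|\ \lambda p^{(s)}_{f_1} + (1-\lambda)p^{(s)}_{f_2}\big) \le \lambda D(f_1) + (1-\lambda) D(f_2) = \lambda T(\Delta_1) + (1-\lambda)T(\Delta_2),
\end{equation*}
and since $f_\lambda$ is feasible at level $\Delta_\lambda$ we conclude $T(\Delta_\lambda) \le D(f_\lambda) \le \lambda T(\Delta_1) + (1-\lambda)T(\Delta_2)$, which is convexity.

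The step I expect to be the main obstacle is the legitimacy of the mixed map $f_\lambda$: the definition of $T$ quantifies over deterministic maps $f:\mathcal{X}\to\mathcal{Z}$, whereas $f_\lambda$ is a randomized map. I would address this by either (a) reading the optimization as being over Markov kernels $p(\bm z\mid\bm x)$ from the outset (the natural setting, and the one in which the convex-combination kernel is manifestly admissible), or (b) enlarging the representation space $\mathcal{Z}$ to carry an auxiliary randomization coordinate so that $f_\lambda$ can be realized deterministically without changing $D(f)$ or $C^{(s)}(f,g)$, or (c) noting that since $\mathcal{X}$ together with $p^{(s)}$ typically admits a splitting into two parts of appropriate mass, one can realize the mixture by a deterministic map on a partition of $\mathcal{X}$. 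Any one of these makes the interpolation rigorous; the remaining ingredients (linearity of $C^{(s)}$ in the kernel, linearity of the induced joints, joint convexity of $d$) are then routine, and the non-attainment case is a standard limiting argument.
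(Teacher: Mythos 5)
Your proposal is correct and follows essentially the same rate--distortion-style route as the paper: nested feasible sets give monotonicity, and mixing the two (near-)optimal representations together with joint convexity of $d$ gives convexity (you check feasibility at $\lambda\Delta_1+(1-\lambda)\Delta_2$ directly, while the paper passes through the mixture's risk $\Delta_\lambda$ and then invokes monotonicity --- a cosmetic difference). The realizability concern you flag is genuine --- the paper simply posits a map $f_\lambda$ inducing the mixed joints --- and your fix (a), reading the minimization as over Markov kernels $p(\bm{z}\mid\bm{x})$, is the clean resolution; note, however, that fix (c) as stated does not work, since a deterministic map on a partition of $\mathcal{X}$ yields $p(\bm{x})\,\delta_{f_i(\bm{x})}(\bm{z})$ piecewise rather than the per-$\bm{x}$ mixture kernel $\lambda\,\delta_{f_1(\bm{x})}+(1-\lambda)\,\delta_{f_2(\bm{x})}$.
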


\begin{proof}
The proof of this theorem is mainly based on the proposed approach in Rate-Distortion theory \cite{cover1999elements}. Particularly, consider two positive numbers $\Delta_1$ and $\Delta_2$, and assume $\Delta_1 \leq \Delta_2$. For a given classifier $g$, we use $\mathcal{F}_{\Delta_1}$ and $\mathcal{F}_{\Delta_2}$ to denote the sets of mappings $f$ such that $C^{(s)}(f,g) \leq \Delta_1$ and $C^{(s)}(f,g) \leq \Delta_2$, respectively. First, we show that $T(\Delta)$ is non-increasing. Indeed, from $\Delta_1 \leq \Delta_2$, $\mathcal{F}_{\Delta_1} \subset \mathcal{F}_{\Delta_2}$:
\begin{eqnarray}
T(\Delta_1) &=& \min_{f \in \mathcal{F}_{\Delta_1}} d(p^{(u)}(Y,Z) || p^{(s)}(Y,Z)) \nonumber\\
&\geq& \min_{f \in \mathcal{F}_{\Delta_2}} d(p^{(u)}(Y,Z) || p^{(s)}(Y,Z)) = T(\Delta_2). \nonumber
\end{eqnarray}Second, to prove the convexity of  $T(\Delta)$, we show that:
\begin{equation}
\label{eq: prove convexity of T(V)}
 \lambda T(\Delta_1) \!+\! (1\!-\!\lambda)T(\Delta_2) \!\geq\! T(\lambda \Delta_1 \!+\! (1\!-\!\lambda) \Delta_2), \forall  \lambda \! \in \! [0,1]. \end{equation}To prove (\ref{eq: prove convexity of T(V)}), we need some additional notations. Define:
\begin{equation}
\begin{aligned}
f_1 \quad &= \argmin_{f: \mathcal{X} \rightarrow \mathcal{Z}} D(f) \quad
\textrm{s.t.} \quad & C^{(s)}(f,g) \leq \Delta_1, \label{eq: tata20}
\end{aligned}
\end{equation}

\begin{equation}
\begin{aligned}
f_2 \quad &= \argmin_{f: \mathcal{X} \rightarrow \mathcal{Z}} D(f) \quad
\textrm{s.t.} \quad & C^{(s)}(f,g) \leq \Delta_2. \label{eq: tata21}
\end{aligned}
\end{equation}Note that for any $f$, $Y \rightarrow X \rightarrow Z$ forms a Markov chain, thus:
\begin{equation}
\label{eq: markov 1}
 p^{(u)}(Y,Z) = p^{(u)}(Y|X) \, p^{(u)}(X,Z),   
\end{equation}
\begin{equation}
\label{eq: markov 2}
 p^{(s)}(Y,Z) = p^{(s)}(Y|X) \, p^{(s)}(X,Z),   
\end{equation}where $p^{(u)}(Y|X)$ and $p^{(s)}(Y|X)$ are independent of $f$ and only depend on the conditional distributions of label and data on seen and unseen domains. 

{Let $p^{(u)}_1(Y,Z)$, $p^{(s)}_1(Y,Z)$ be the joint distributions of $Y$ and $Z$ on unseen and seen domain produced by $f_1$, and similarly $p^{(u)}_2(X,Z)$, $p^{(s)}_2(X,Z)$ be the joint distributions produced by $f_2$.} Define:
\begin{eqnarray}
\label{eq: tata29}
p^{(u)}_{\lambda}(X,Z) =\lambda p^{(u)}_1(X,Z) + (1-\lambda) p^{(u)}_2(X,Z),
\end{eqnarray}
\begin{eqnarray}
\label{eq: tata292}
p^{(s)}_{\lambda}(X,Z) =\lambda p^{(s)}_1(X,Z) + (1-\lambda) p^{(s)}_2(X,Z).
\end{eqnarray}
By definition, the left hand side of (\ref{eq: prove convexity of T(V)}) can be rewritten by:
\begin{eqnarray}
&& \! \lambda T(\Delta_1) + (1-\lambda)T(\Delta_2) \nonumber\\
&=& \!\lambda d(p^{(u)}_1(Y,Z) \, || \, p^{(s)}_1(Y,Z) ) \nonumber\\
&+&\! (1-\lambda) d(p^{(u)}_2(Y,Z) \, || \, p^{(s)}_2(Y,Z)) \nonumber \\
&=&\! \lambda d(p^{(u)}(Y|X) p^{(u)}_1(X,Z)  || p^{(s)}(Y|X) p^{(s)}_1(X,Z) ) \label{eq: 10003}\\
&+& \! (\!1 \!-\!\lambda\!) d  (p^{(u)}\!(Y|X\!)  p^{(u)}_2 \!(X,\!Z) || p^{(s)}\!(Y|X\!)  p^{(s)}_2 \!(X,\!Z)\!) \! \label{eq: 10004}\\
&\geq& \! d(p^{(u)}(Y|X) p^{(u)}_{\lambda}(X,Z) || p^{(s)}(Y|X) p^{(s)}_{\lambda}(X,Z))  \label{eq: tata28}
\end{eqnarray}where (\ref{eq: 10003}) and (\ref{eq: 10004}) {are due to} (\ref{eq: markov 1}) and (\ref{eq: markov 2}); (\ref{eq: tata28}) {is} due to (\ref{eq: tata29}), (\ref{eq: tata292}), and the convexity of $d(\cdot || \cdot)$.

Let $f_{\lambda}$ {be} the corresponding function that induces the joint distribution $p^{(u)}_{\lambda}(X,Z)$ and $p^{(s)}_{\lambda}(X,Z)$.  Define:
\begin{equation}
\label{eq: tata30}
    \Delta_{\lambda} = \int_{\bm{x} \in \mathcal{X}} \int_{\bm{z} \in \mathcal{Z}} p^{(s)}_{\lambda}(\bm{x},\bm{z}) \ell(g(\bm{z}), y^{(s)}(\bm{x})) \, d\bm{x} d\bm{z}.
\end{equation}
By definition of $T(\Delta)$ in Definition \ref{def: 3}, we have:
\begin{equation}
 \label{eq: tata42}
d(p^{(u)}\!(Y|X) \, p^{(u)}_{\lambda}\!(X,Z) || p^{(s)}(Y|X) \, p^{(s)}_{\lambda}(X,Z)) \!\geq\! T(\Delta_{\lambda}). 
\end{equation}Combine (\ref{eq: tata28}) and (\ref{eq: tata42}):
\begin{equation}
 \label{eq: tata43}
 \lambda T(\Delta_1) + (1-\lambda)T(\Delta_2) \geq  T(\Delta_{\lambda}).   
\end{equation}
That said, the left-hand side of (\ref{eq: prove convexity of T(V)}) is greater or equal to $T(\Delta_{\lambda})$. {Next,} we show that:
\begin{equation}
 \label{eq: tata44}
 T(\Delta_{\lambda}) \geq T(\lambda \Delta_1 + (1-\lambda) \Delta_2).
\end{equation}
Since $T(\Delta)$ is non-increasing, (\ref{eq: tata44}) is equivalent to:
 \begin{equation}
 \label{eq: tata45}
 \Delta_{\lambda}  \leq  \lambda \Delta_1 + (1-\lambda) \Delta_2.   
 \end{equation}
Indeed, we have:
 \begin{eqnarray}
 &\!& \! \Delta_{\lambda} = \int_{\bm{x}} \int_{\bm{z}} p^{(s)}_{\lambda}(\bm{x},\bm{z}) \ell(g(\bm{z}), y^{(s)}(\bm{x})) d\bm{x} d\bm{z} \label{eq: tata47}\\
 &=& \lambda \int_{\bm{x}} \int_{\bm{z}} p^{(u)}_1(\bm{x},\bm{z}) \ell(g(\bm{z}), y^{(s)}(\bm{x})) d\bm{x} d\bm{z} \label{eq: tata49-a}\\
 &+&  (1-\lambda) \int_{\bm{x}} \int_{\bm{z}} p^{(u)}_2(\bm{x},\bm{z}) \ell(g(\bm{z}),y^{(s)}(\bm{x})) d\bm{x} d\bm{z} \label{eq: tata49}\\
 &\leq& \lambda \Delta_1 + (1-\lambda) \Delta_2 \label{eq: tata50}
 \end{eqnarray}with (\ref{eq: tata47}) due to (\ref{eq: tata30}), (\ref{eq: tata49-a}) and (\ref{eq: tata49}) due to (\ref{eq: tata29}), (\ref{eq: tata50}) due to (\ref{eq: tata20}) and (\ref{eq: tata21}), respectively. From (\ref{eq: tata45}) and (\ref{eq: tata50}), (\ref{eq: tata44}) follows. Finally, from (\ref{eq: tata43}) and (\ref{eq: tata44}), (\ref{eq: prove convexity of T(V)}) follows. The proof is complete.
\end{proof}

It is worth noting that the convexity of $d(\cdot || \cdot)$ is not a restricted condition, indeed, most of the divergence functions, for example, the Kullback-Leibler (KL) divergence is convex. 

Theorem \ref{theorem: 1} shows that only enforcing a small distribution discrepancy between domains will increase the classification risk and vice-versa.

\begin{table*}[ht]
\caption{Classification accuracy of 12 tested algorithms on PACS, VLCS, and C-MNIST datasets using the  Training-domain validation method (Traditional) proposed in \cite{gulrajani2020search} \textit{vs.} using our new validation method.
}
\vspace{2 pt}
\renewcommand{\arraystretch}{1}
\resizebox{\textwidth}{!}{
\begin{tabular}{@{}cccccccccccccc@{}}
\toprule
\textbf{Algorithm} & \textbf{Fish} \cite{shi2021gradient} & \textbf{IRM} \cite{arjovsky2020invariant} & \textbf{GDRO} \cite{sagawa2019distributionally} & \textbf{Mixup} \cite{xu2020adversarial} & \textbf{CORAL} \cite{sun2016deep} & \textbf{MMD} \cite{li2018domain} & \textbf{DANN} \cite{ganin2016domain} & \textbf{CDANN} \cite{li2018deep} & \textbf{MTL} \cite{blanchard2021domain} & \textbf{VREx} \cite{krueger2021out} & \textbf{RSC} \cite{huang2020self} & \textbf{SagNet} \cite{nam2021reducing} & \textbf{Wins} \\ \midrule
\begin{tabular}[c]{@{}c@{}}PACS\\ (Traditional)\end{tabular} & \textbf{84.6} & 84.9 & 84.2 & 83.3 & \textbf{85.1} & 83.6 & 84.6 & \textbf{86.4} & 83.0 & \textbf{84.5} & \textbf{85.2} & 83.7 &  \\
\begin{tabular}[c]{@{}c@{}}PACS\\ (Ours)\end{tabular} & 82.0 & \textbf{85.3} & \textbf{84.3} & \textbf{85.3} & 84.9 & \textbf{85.0} & \textbf{84.9} & 82.0 & \textbf{84.2} & 84.2 & 81.3 & \textbf{85.1} & \multicolumn{1}{l}{7/12} \\ \midrule
\begin{tabular}[c]{@{}c@{}}VLCS\\ (Traditional)\end{tabular} & \textbf{79.4} & 76.0 & 78.1 & 77.4 & 76.8 & \textbf{78.5} & 77.8 & 79.2 & 77.3 & 76.4 & \textbf{78.6} & \textbf{80.5} &  \\
\begin{tabular}[c]{@{}c@{}}VLCS\\ (Ours)\end{tabular} & {77.5} & \textbf{79.2} & \textbf{79.6} & \textbf{77.6} & \textbf{78.8} & {78.0} & \textbf{78.5} &  \textbf{80.3} & \textbf{78.2} & \textbf{78.6} & 76.1 & 79.3 &\multicolumn{1}{l}{8/12} \\ \midrule
\begin{tabular}[c]{@{}c@{}}CMNIST \\ (Traditional)\end{tabular} & \textbf{10.0} & 10.0 & 10.2 & \textbf{10.4} & 9.7 & \textbf{10.4} & 10.0 & 9.9 & 10.5 & 10.2 & 10.2 & 10.4 &  \\
\begin{tabular}[c]{@{}c@{}}CMNIST \\ (Ours)\end{tabular} & 9.7 & \textbf{10.9} & \textbf{12.6} & 10.3 & \textbf{11.2} & 9.9 & \textbf{11.1} & \textbf{10.2} & \textbf{11.5} & \textbf{15.6} & \textbf{13.8} & \textbf{10.5} & \multicolumn{1}{l}{9/12} \\ \bottomrule
\end{tabular}}
\label{table:res}
\end{table*}

\section{A new validation method}
\label{sec: method}

Based on Theorem \ref{theorem: 1}, we argue that to {select} a good model for unseen domains, one must account for both the classification risk and the domain discrepancy not only in the training process but also in the validation process. Note that state-of-the-art model evaluation methods for DG are mainly based on the classification risk or, equivalently, the classification accuracy \cite{9847099} \cite{gulrajani2020search} on the validation set to select the models. Given this fact, we {propose} to select a model that minimizes the following objective function on the validation set:
\begin{equation}
\large
\label{eq: practical_loss}
    L_{\textup{Validation loss}} = \beta (1-\alpha)L_{\textup{Classification risk}} + \alpha L_{\textup{Domain-discrepancy loss}}
\end{equation}where $\alpha$ is the convex combination hyper-parameter and $\beta$ is the scale hyper-parameter that supports the combination of objectives with different scales. 

It is pretty clear that the cross-entropy loss is a good representation of classification risk. However, it is hard to {choose the measure for quantifying} the domain-discrepancy loss. Indeed, there exist various definitions of domain discrepancy. Several works characterize the domain-discrepancy via the difference in the marginal distributions \cite{li2018domain,sun2016deep}, other works measure {it} by the mismatch in conditional distributions \cite{arjovsky2020invariant}. We believe that finding a good measure for domain discrepancy is still an open problem. Therefore, in this short paper, we decide to use the widely accepted Maximum Mean Discrepancy (MMD) loss \cite{li2018domain} in the feature space to quantify the domain discrepancy. {We also} acknowledge that {though MMD measure is extensively used, it may not be the optimal choice.}

In practice, we found that MMD loss is at the same scale as the cross-entropy loss when the training process is stable, we thus choose $\beta$ as 1. For $\alpha$, we consider the classification performance {as the more important goal and thus} heuristically choose $\alpha$ as 0.2. From our experiments, we found that the performance of our validation method is robust to small values of $\alpha$ within the range of $[0.1, 0.3]$.
{One more insight from Theorem \ref{theorem: 1} is that it is advisable to avoid extreme points in $\Delta$ (classification error) to maintain a balance between the model's generalization and prediction capabilities. This means the classification error should not be too small or too large.} Thus, for each hyper-parameter configuration, we sort the validation cross-entropy loss in ascending order and only pick the models that produce 5\% to 50\% percentile of the validation cross-entropy loss as a subset of candidates for model selection.
Our implementation is released at \href{https://github.com/thuan2412/A-principled-approach-for-model-validation-for-domain-generalization}{this link}\footnote{https://github.com/thuan2412/A-principled-approach-for-model-validation-for-domain-generalization}.

\section{Numerical results}
\label{sec: numerical results}

We compare the proposed model selection method with the Training-domain validation method described in \cite{gulrajani2020search} on {three} datasets: PACS, {VLCS, and} Colored-MNIST (C-MNIST)  using  DomainBed package and 12 different DG algorithms provided there \cite{gulrajani2020search}. Recall that the Training-domain validation method chooses the model that produces the highest validation accuracy, while our method selects the model that minimizes the objective function in (\ref{eq: practical_loss}). For PACS {and VLCS datasets}, we report the average test accuracy over 4 different tasks with each time leaving one domain out as the unseen domain. For the C-MNIST dataset, we only focus on the most difficult domain, where the correlation between the label and the color of the unseen domain is completely different from the seen domains and no algorithm can achieve more than 10.5\% points accuracy \cite{gulrajani2020search}. 

The validation set is formed using 20\% data from each seen domain, denoted as the training-domain validation set in \cite{gulrajani2020search}. We follow exactly the same settings and training routine used in DomainBed and conduct 20 trials of random search over a joint distribution of hyper-parameters for each task per algorithm. For the MMD loss implementation, we directly use the code provided in DomainBed package.
We train each model for 5000 steps. The validation cross-entropy loss, MMD loss, and validation accuracy are recorded every 100 steps for VLCS dataset and every 300 steps for PACS and C-MNIST datasets.

{With $\alpha=0.2, \beta=1$,} the performance of each algorithm under different validation methods on PACS, VLCS and Colored-MNIST datasets is shown in Table \ref{table:res}. We refer to the Training-domain validation method as ``Traditional" and the proposed method as ``Ours". For the PACS dataset, the proposed validation method can select slightly better models for seven out of twelve DG algorithms. For the remaining five DG algorithms, our method achieves comparable performance with the ``Traditional" method on CORAL \cite{sun2016deep} and VREx \cite{krueger2021out}. However, for Fish \cite{shi2021gradient}, CDANN \cite{li2018deep} and RSC\cite{huang2020self}, we observe a performance deterioration. 
{The effectiveness of the proposed method can be more easily observed on VLCS dataset, where eight out of twelve DG algorithms get an improved model selected, with the improvement varies from 0.2\% to 3.2\%.}
For the C-MNIST dataset, the proposed validation method consistently selects models with better performance compared with the ``Traditional" validation method. 
{Accuracy improves for nine out of twelve tested algorithms with the most significant improvement for VREx \cite{krueger2021out} method by 5.4\%.} 

\section{Conclusion}
\label{sec: conclusion}
{By showing the trade-off between minimizing the classification risk and domain discrepancy, we demonstrate that the traditional model selection methods may not be suitable for DG problem and propose a new model selection method that considers both objectives. While our approach outperforms traditional methods on several DG algorithms and datasets, it lacks an automatic hyper-parameter tuning strategy. Note that the domain discrepancies may vary across different datasets, one may not expect the same optimal values of $\alpha$ and $\beta$ for all datasets. Determining the ``optimal" ones could be a hard problem both practically and theoretically. We thus leave it as an open problem for future work. Despite this limitation, we believe our approach provides insight and initial results for exploring new model selection methods specific for DG problem.
}

\bibliography{example_paper}

\end{document}